\newcommand{\parfrac}[2]{\paran{\frac{#1}{#2}}}
\newcommand{\paran}[1]{\left( #1 \right)}
\newcommand{\gmbl}{\mathsf{Gumbel}}
\newcommand{\gumbel}[2]{\gmbl\paran{#1,#2}}
\newcommand{\trunPoisson}{\mathsf{TruncatedPoisson}}
\newcommand{\gumbelMech}{\textsc{Truncated-Gumbel-Arg-Min}}
\newcommand{\poisson}[1]{\trunPoisson\paran{#1}}
\newcommand{\gumbelArgMin}[3]{\gumbelMech\paran{#1,#2,#3}}
\newtheorem{definition}{Definition}
\newtheorem{lemma}{Lemma}
\newtheorem{theorem}{Theorem}
\title{Differentially Private Adversarial Robustness Through Randomized Perturbations}
\author{Nan Xu$^\$$, Oluwaseyi Feyisetan$^*$, Abhinav Aggarwal$^*$, Zekun Xu$^*$, Nathanael Teissier$^\dagger$\\$^\$$University of Southern California, Los Angeles, CA, USA\\$^*$Amazon Alexa, Seattle, WA, USA\\$^\dagger$Amazon Alexa, Arlington, VA, USA\\\texttt{nanx@usc.edu, \{sey,aggabhin,zeku,natteis\}@amazon.com}}
\date{}
\begin{document}

\maketitle
\begin{abstract}
Deep Neural Networks, despite their great success in diverse domains, are provably sensitive to small perturbations on correctly classified examples and lead to erroneous predictions. Recently, it was proposed that this behavior can be combatted by optimizing the worst case loss function over all possible substitutions of training examples. However, this can be prone to weighing unlikely substitutions higher, limiting the accuracy gain. In this paper, we study adversarial robustness through randomized perturbations, which has two immediate advantages: (1) by ensuring that substitution likelihood is weighted by the proximity to the original word, we circumvent optimizing the worst case guarantees and achieve performance gains; and (2) the calibrated randomness imparts differentially-private model training, which 
additionally improves robustness against adversarial attacks on 
the model outputs. Our approach uses a novel density-based mechanism based on truncated Gumbel noise, which ensures training on substitutions of both rare and dense words in the vocabulary while maintaining semantic similarity for model robustness.

\end{abstract}
\section{Introduction}
Deep neural networks (DNNs) have found applications within multiple domains: from computer vision~\cite{krizhevsky2012imagenet}, and natural language processing~\cite{mikolov2013distributed}, to robotics~\cite{kober2013reinforcement} and self-driving cars~\cite{bojarski2016end}. However, DNNs have been shown to be vulnerable to adversarial examples. These are small perturbations of examples that are correctly classified by well-trained models but incorrectly classified in the target~\cite{szegedy2013intriguing,goodfellow2014explaining}. 

A few approaches have been proposed to defend against such adversarial attacks. One of the most widely used methods is adding the adversarial examples to the original training set and retraining the model. On most kinds of perturbations, such augmented training approach has achieved improved robustness without harming accuracy on the original testing sets~\cite{jia2017adversarial,iyyer2018adversarial,ribeiro2018semantically,belinkov2017synthetic,ebrahimi2017hotflip}. However, this often leads to the augmented neural network over-fitting to the additional data~\cite{matyasko2017margin}, but failing to perform robustly against other types of adversarial examples~\cite{jia2017adversarial,belinkov2017synthetic}. Recently, certified defences have been adopted in the computer vision domain~\cite{lecuyer2019certified,dvijotham2018training,gowal2018effectiveness}. To defend against perturbations on text data, the Interval Bounded Propagation (IBP) approach was proposed by~\cite{jia2019certified} to minimize the upper bound on the worst-case loss that word substitutions can induce during the training procedure.

In this paper, we propose a new approach to generate adversarial examples via word substitutions in textual analysis. Our approach is based on randomized mechanisms satisfying Metric Differential Privacy ($d_{\chi}$-privacy~\cite{andres2013geo}) -- a variant of Differential privacy (DP). DP was proposed by ~\cite{dwork2006calibrating} and has been established as a \textit{de facto} standard for privacy-preserving data analysis. It mathematically guarantees, given a privacy parameter $\epsilon$, that an adversary observing separate outputs of computations over adjacent databases (described by a Hamming distance) will make essentially the same inference. As opposed to standard DP, with $d_{\chi}$-privacy, the guarantees are scaled by a (different) distance metric between adjacent databases, and privacy preserving noise is sampled from a multivariate (Laplacian) distribution. The distances are over a metric space as defined by word embeddings such as GloVe~\cite{pennington2014glove} or fastText~\cite{bojanowski2017enriching}, while the data points are vector representions of the words. The mechanism assigns higher substitution probability, based on the noise added, to words closer to the original one than those further away. The private text mechanisms proposed by~\cite{fernandes2019generalised} and~\cite{feyisetan2019leveraging,feyisetan2020privacy} work using this approach.

However, for words with embedding vectors in dense areas, the existing multivariate Laplace mechanisms fail to distinguish nearer (\emph{i.e.}, more relevant) words from other close but less relevant words. As a result, for a given value of the privacy parameter $\epsilon$, an irrelevant word could have a similar substitution probability as a relevant word. We propose a new metric-DP mechanism called the truncated Gumbel perturbation mechanism to allow a smaller range of nearby words considered than the multivariate Laplace mechanism. The new mechanism samples a $k$ value from a truncated Poisson distribution as substitution candidates before perturbation, hence words nearby with irrelevant meanings are disregarded. This better preserves word semantics and improves utility of models trained on perturbed datasets in downstream tasks.



In this paper, we investigate the performance of a well-trained IBP model on classification tasks when the input text is perturbed by a metric DP mechanism with different values of $\epsilon$ -- corresponding to different degrees of semantic preservation. Motivated by the success of augmented training with adversarial data such as~\cite{jia2017adversarial}, we also add the adversarial examples generated by the privacy mechanisms to the original training set while comparing its robustness with IBP. 

The contributions of this paper is as follows:
\begin{itemize}
\item We propose a novel metric-DP mechanism called the truncated Gumbel mechanism, which provides formal privacy guarantees, and better preserves semantic meanings than the existing multivariate Laplace mechanisms.
\item To the best of our knowledge, we are the first to leverage metric-DP mechanisms to generate adversarial examples and study the performance of different adversarial training approaches at different values of $\epsilon$. 
\item We empirically demonstrate the benefit of the truncated Gumbel mechanism in preserving semantics and show that augmented training performs better than certifiably robust training, both in clean and adversarial accuracy.
\end{itemize}
\section{Related Work}
\paragraph{Privacy Preservation}
DP~\cite{dwork2006calibrating} preserves privacy on the output of a computation by adding noise sampled from a certain distribution (e.g. Laplace). The magnitude of the noise is proportional to the \emph{sensitivity} of the computation, and controlled by the parameter $\epsilon$. 
We consider a relaxation of DP, metric DP or $d_{\chi}$-privacy, that originated in the context of location privacy, where locations close to the user are assigned higher probability those far away~\cite{andres2013geo,chatzikokolakis2013broadening}.
For text, the corollary to geo-location cooridinates are word vectors in an embedding space. To preserve privacy, noise is sampled from a multivariate distribution such as the multivariate Laplace mechanism in ~\cite{fernandes2019generalised,feyisetan2020privacy} or a hyperbolic distribution in~\cite{feyisetan2019leveraging}.


\paragraph{Adversarial Attacks} 
Deep neural networks are vulnerable to adversarial examples, where perturbations applied to examples correctly classified by well-trained models, lead to mis-classification with significantly high confidence~\cite{szegedy2013intriguing,goodfellow2014explaining}. In the text domain, adversarial example generation includes techniques for extraneous text insertion~\cite{jia2017adversarial}, word substitution~\cite{alzantot2018generating}, paraphrasing~\cite{iyyer2018adversarial,ribeiro2018semantically}, and character-level noise~\cite{belinkov2017synthetic,ebrahimi2017hotflip}. In this paper, we generate adversarial examples by word-level perturbations without semantic-preservation constraints. Specifically, randomized perturbations satisfying metric-DP are employed, with the privacy parameter $\epsilon$ controlling semantic similarity during substitutions.

\paragraph{Adversarial Training}
Augmenting training sets with adversarial examples is a common way of improving robustness in adversarial training~\cite{szegedy2013intriguing,goodfellow2014explaining}. Although it achieves improved robustness without harming accuracy on the original testing sets~\cite{jia2017adversarial,iyyer2018adversarial,ribeiro2018semantically,belinkov2017synthetic,ebrahimi2017hotflip}, augmented training is still vulnerable when tested on other adversarial examples~\cite{jia2017adversarial,belinkov2017synthetic}.
Certified defences which provide guarantees of robustness to norm-bounded attacks have become popular in computer vision~\cite{lecuyer2019certified,dvijotham2018training,gowal2018effectiveness}. For text, the Interval Bound Propagation (IBP) approach minimizes an upper bound on the worst-case loss during training that any combination of word substitutions can induce~\cite{jia2019certified}. This requires that the allowed word substitutions are known \emph{a-priori}.
In this paper, we study the robustness of an IBP-trained model on adversarial examples generated by metric DP mechanisms. Furthermore, we analyze how adding adversarial examples into the training set can help improve robustness.

\paragraph{Connections between Privacy Preservation and Adversarial Learning}
To the best of our knowledge, this paper is the first to propose: perturbing text with metric-DP mechanisms, and testing the robustness of adversarial training approaches with these adversarial examples. Connections between privacy and adversarial learning have been studied extensively in the different domains~\cite{pinot2019unified}.
Two key properties of DP have been leveraged to add a noise layer to the network's architecture to provide guaranteed robustness against adversarial examples~\cite{lecuyer2019certified}. Similarly, trade-offs between DP preservation and provable robustness have been studied by learning private model parameters first followed by rigorous robustness bound computation~\cite{phan2019scalable,phan2019heterogeneous}. 
\section{Technical Preliminaries}
We begin with providing some background on metric Differential Privacy and the multivariate Laplace mechanism, which is commonly used in privacy-preserving textual analysis.
\paragraph{Differential Privacy}
First proposed by~\cite{dwork2006calibrating}, DP provides a strong mathematical framework for guaranteeing that the output of a randomized mechanism will remain essentially unchanged on any two neighboring input databases. Formally, a randomized mechanism $M:\mathcal{X}\rightarrow\mathcal{Y}$ satisfies $(\epsilon,\delta)$-DP if for any $x,x'\in\mathcal{X}$ that differ in only one entry, then it holds for all $Y\subseteq\mathcal{Y}$ that:
\begin{align}
\text{Pr}[M(x)\in Y]\leq e^{\epsilon}\text{Pr}[M(x')\in Y]+\delta,
\end{align}
where $\epsilon>0$ and $\delta\in[0,1]$ are parameters that quantify the strength of the privacy guarantee. If $\delta=0$, we say that the mechanism $M$ is $\epsilon$-DP.
This definition can be generalized to other metrics for capturing dataset proximity depending on the application, e.g., the Manhattan distance metric used to provide indistinguishability if the individual's registration date differs at most 5 days in two databases, and the Euclidean distance on the 2-dimensional space used to preserve the user's longitude and latitude information~\cite{chatzikokolakis2015constructing}. In particular, for text data, we adopt metric Differential Privacy (a.k.a. $d_{\chi}$-privacy), following~\cite{chatzikokolakis2013broadening,fernandes2019generalised,feyisetan2020privacy}. In this framework, we ensure that for all $y\in\mathcal{Y}$, it holds that:
\begin{align}
\text{Pr}[M(x)=y]\leq e^{\epsilon d(x,x')}\text{Pr}[M(x')=y],
\end{align}
where the metric $d(x,x')=\left\|\phi(x)-\phi(x')\right\|$ describes the Euclidean distance of the word representations for $x,x'$ in some semantic embedding space like GloVe~\cite{pennington2014glove}. Under this definition, the likelihood of a similar output from the mechanism is weighted in proportion to distance of the word being substituted.

\paragraph{Multivariate Laplace Mechanism}\label{sec:laplace}
A popular approach for achieving metric-DP is to use a multivariate Laplace Mechanism for high-dimensional data~\cite{wu2017bolt,feyisetan2020privacy}. Given the embedding vector $\phi(x)\in\mathcal{R}^n$ for each word in the vocabulary, an $n$-dimensional noise $\kappa$ is sampled following the distribution $p(\kappa) \propto \exp(-\epsilon\left\|\kappa\right\|)$. This variate is obtained by first sampling a uniform vector in the $n$-dimensional unit ball and scaling it using a Gamma variate sampled from $\Gamma(n,1/\epsilon)$. The perturbed word $x'$ is the nearest word to $\phi(x)+\kappa$ in the embedding space.

\paragraph{Truncated Poisson Sampling} The mechanism we define in this paper uses random variates sampled from a Poisson distribution, but truncated in value if it gets too large. We define this density function below.
\begin{definition}\label{def:poisson}
Let $\lambda > 0$ be a real and $a,b$ be two integers with $1 \leq a < b$. We say that a random variable $X$ follows a $\poisson{\lambda;a,b}$ distribution if the following holds:
\begin{align*}
    \Pr(X = k) &= \begin{cases} 
        \frac{e^{-\lambda}\lambda^k}{k!} &\text{ if }a \leq k < b\\
        1 - \sum_{k=a}^{b-1}\frac{e^{-\lambda}\lambda^k}{k!} &\text{ if }k = b\\
        0 & \text{ otherwise.}
    \end{cases}
\end{align*}
\end{definition}
To sample a random variate $X$ following this distribution, we sample $Y \sim \textsf{Poisson}(\lambda)$ and set $X = Y$ if $a \le Y < b$, and $X = b$, otherwise. An important property of such random variables is that for all $\lambda > 0$, it holds that $\Pr(X = b) > e^{-\lambda}$. This follows from the fact that since $1 \leq a < b$, we can write $\Pr(X = b) = \sum_{k=0}^\infty \frac{e^{-\lambda}\lambda^k}{k!} - \sum_{k=a}^{b-1}\frac{e^{-\lambda}\lambda^k}{k!} = e^{-\lambda} + \sum_{k = 1}^{a-1} \frac{e^{-\lambda}\lambda^k}{k!} + \sum_{k = b+1}^{\infty} \frac{e^{-\lambda}\lambda^k}{k!} > e^{-\lambda}$. This will be useful in our privacy analysis.

\paragraph{Gumbel Distribution} Our mechanism uses random variates sampled from the Gumbel distribution, defined over all $x \in\mathbb{R}$, using the cumulative density $\gmbl(x;\mu, \beta) = \exp\paran{-\exp\paran{-(x-\mu)/\beta}}$ for $\mu \in \mathbb{R}$ and $\beta > 0$. We write $X \sim \gmbl(0, b)$ to denote a Gumbel distributed random sample with $\mu = 0$ and $\beta=b$.

\paragraph{Lambert-W Function} This is a popular multi-valued function obtained from the inverse relation of the function $f(w) = we^w$ for any complex valued $w$. We focus on only the real principal branch of this function defined whenever $f(w) \ge -1$, in which we have the asymptotic identity $W(x) = \ln x - \ln\ln x + \Theta\paran{\frac{\ln\ln x}{\ln x}}$ (see~\cite{hoorfar2008inequalities}). 

\section{Overview of our Approach}
We now give an overview of approaches discussed in this paper for defending against adversarial attacks. 
Given text input $x\in\mathcal{X}$, we consider classification tasks where a model $f(x;\theta)$, parametrized by $\theta$, should predict a label $y\in\mathcal{Y}$. For sentiment classification tasks, the input $x$ is composed of a string of $l$ words $x_1,x_2,\cdots,x_l$ and labelled by one of the two classes $y\in\{1,-1\}$, where the positive sentiment is denoted by $1$ while the negative by $-1$. For textual entailment tasks, two texts are given, one is the premise $x$ and the other is the hypothesis $x'$, and a label is provided based on the relationship between the two: $y\in\{0, 1, 2\}$ denoting the entailment, contradiction or neutral relationship, respectively. Performance of the classification model is evaluated by the percentile of correct predictions inferred on the testing set: $\sum_{x_i\in\mathcal{D}_{\text{test}}}\mathbbm{1}(f(x_i;\theta)= y_i)/|\mathcal{D}_{\text{test}}|$, where $\mathbbm{1}$ is an indicator function equal to $1$ if the predicted label $f(x_i;\theta)$ is identical to the ground-truth $y_i$, $0$ otherwise; $|\mathcal{D}_{\text{test}}|$ represents the size of the test set.

\paragraph{Adversarial Attacks by Word Substitutions }
We evaluate the performance of existing certifiably robust trained models when perturbed texts are provided as inputs. Formally, a word-level perturbation is obtained by substituting a given word $x_i$ by another word $\widetilde{x}_i$ in a way that the semantic similarity between the two is determined by the leveraged metric DP mechanism. To achieve this, the additive noise is parametrized by the privacy parameter $\epsilon$: a larger value of $\epsilon$ corresponds to less noise, and vice versa.

For the multivariate Laplace Mechanism of~\cite{feyisetan2020privacy}, since the noise is scaled purely as a function of the distance from the original word, when $\epsilon$ is small, words in the dense regions of the embedding space are prone to getting substituted with dissimilar words (that are further away), compared to the words in the sparse region. This is because in areas where embedding vectors are densely located, the distance between two irrelevant words is commensurate to that between two words with similar meanings in a sparse region. Hence, adapting the word-level substitution to variations in the density of the embedding space can help boost the utility of models trained on perturbed datasets. To do this efficiently (and without any expensive computation of local sensitivity each time a substitution is made), we propose a novel mechanism based on a truncated Gumbel distribution and prove that it admits metric DP. Instead of sampling based on the distance from the original word, this approach samples $k$ candidate substitutions following the Truncated Poisson distribution and then makes a distance-based calibrated random choice from the $k-1$-nearest neighbors of the original word in the embedding space (see Algorithms~\ref{alg:gumbelargmin} and~\ref{alg:gumbel_perturbation}). We describe this mechanism in more detail in Section~\ref{sec:gumbel}, and prove its formal privacy guarantees in Appendix~\ref{sec:proofGumbel}.
\paragraph{Learning with Adversarial Examples } Motivated by the success of augmented training approaches when text perturbations happen in the form of extraneous text insertion~\cite{jia2017adversarial}, paraphrasing~\cite{iyyer2018adversarial,ribeiro2018semantically},  character-level noise~\cite{belinkov2017synthetic,ebrahimi2017hotflip}, we also investigate the effectiveness of adding adversarial examples generated by metric DP mechanisms to the training set for retraining. Retaining the label of each sample, we perturb the text four times, during which every word is perturbed by either the existing multivariate Laplace Mechanism or the proposed truncated Gumbel Mechanism.
\section{Truncated Gumbel Mechanism}\label{sec:gumbel}
Motivated by the approach proposed by~\cite{durfee2019practical}, our density-aware word substitution mechanism uses a Gumbel random variate for selecting amongst a list of candidate perturbations (see Algorithm~\ref{alg:gumbel_perturbation}). To ensure plausible deniability over the entire vocabulary, the support of the substitution mechanism must include all the words, however, limiting the set of candidate substitutions to only the semantically similar words is necessary to maintain utility. 

We balance this trade-off by first randomly selecting the $k$ nearest neighbors of the original word using a truncated Poisson variate, with support over the whole vocabulary (see Step 4). The mean number of candidates is set to the natural logarithm of the vocabulary size, to ensure that this number is neither too small, nor too large. Next, the closest $k-1$ words to the original word are obtained (using a nearest neighbor search) and their distances are recorded (see Steps 5 and 6). A random choice over this set is made using Algorithm~\ref{alg:gumbelargmin}, where the distances are first noised with Gumbel distributed random variates and then the smallest noised distance determines the new word (see Step 7). The Gumbel noise is scaled using the privacy parameter $\epsilon$ and the diameter $\Delta$ of the embedding space, and then clipped using a truncation parameter $C > 0$. The process is repeated independently for each word in the input string.

\begin{algorithm}[t]\small
\SetKwInOut{Input}{Input}
\Input{Real vector 
$u = [u_1,\dots,u_m]$,
scale parameter $b > 0$, truncation parameter $C > 0$}
Sample $g_1,\dots,g_m \sim_{i.i.d.} \gumbel{0}{b}$ truncated between $[-C, C]$.\\
Compute $u' = [u_1+g_1,\dots,u_m+g_m]$.\\
\Return $\arg\min u'$.
\caption{$\gumbelMech$}
\label{alg:gumbelargmin}
\end{algorithm}

\begin{algorithm*}[t]\small
\SetKwInOut{Input}{Input}
\SetKwInOut{Parameter}{Parameter}
\SetKw{Continue}{continue}
\SetKw{Return}{Return}
\SetAlgoLined
\Input{String $x = w_1 w_2 \dots w_{\ell} \in \mathcal{W}^{\ell}$, privacy parameter $\epsilon>0$, word set $\mathcal{W}$.}
Let $\Delta=\max_{w,w'\in \mathcal{W}}\|\phi(w)-\phi(w')\|_2$ be the maximum inter-word distance, $\Delta_0=\min_{\substack{w,w'\in \mathcal{W}\\w\neq w'}}\|\phi(w)-\phi(w')\|_2$ be the minimum inter-word distance. Set $b = \frac{2\Delta}{\min\{W(2\alpha \Delta),\ \log_e(\alpha \Delta_0)\}}$, where $\alpha = \frac{1}{3}\paran{\epsilon - \frac{2(1+\log |\mathcal{W}|)}{\Delta_0}}$ and $W$ denotes the principal branch of the Lambert-W function.\\
Initialize an empty string $\tilde{x}$.\\
\For{$w_i \in x$}{
Sample $k = \poisson{\log |\mathcal{W}|; 1,|\mathcal{W}|}$.\\
Find the top $k$ closest words to $w_i$ in $\mathcal{W}$ as  $\mathbf{u} = [u_1,u_2,\dots,u_j,\dots,u_k]$, where $u_1 = w_i$.\\
Compute the distances $\mathbf{d} = [d_1,d_2,\dots d_j,\dots, d_k]$, where $d_j = ||w_i - u_j||_2$.\\
Set $\widetilde{w}_i = u_j$, where $j = \gumbelArgMin{\mathbf{d}}{b}{\Delta}$.\\
Add $\widetilde{w}_i$ to $\tilde{x}$.}
 \Return{$\tilde{x}$.}
 \caption{Truncated Gumbel Perturbation Mechanism}
 \label{alg:gumbel_perturbation}
\end{algorithm*}

\section{Experimental Results}
We evaluate the proposed privacy mechanism, adversarial attacks and the defense approach through answers to the following questions:
\begin{enumerate}[label=\textbf{Q}\textbf{{\arabic*}}]
\item How does the privacy parameter $\epsilon$ affect the behavior of the perturbation mechanisms on different text classification tasks?
\item Does the proposed truncated Gumbel mechanism lead to a smaller range of word substitutions compared to the Multivariate Laplace Mechanism?
\item How will different adversarial training approaches, i.e., the IBP approach with certified robustness and the proposed augmented training, perform when testing on adversarial examples derived from metric-DP mechanisms?
\end{enumerate}
\subsection{Tasks and Datasets}
We evaluate the robustness of models on two text classification tasks: sentiment analysis on the IMDb movie review dataset~\cite{maas2011learning} and textual entailment on premise-hypothesis relation dataset SNLI~\cite{bowman2015large}. We use 300-dimensional GloVe vectors for word embedding~\cite{pennington2014glove}. The statistics of the two datasets are listed in Table.~\ref{tab:dataset_summary}.
\paragraph{Sentiment Analysis} In IMDb, each movie review is accompanied with either a positive or negative label. For the binary classification task, we implemented the CNN architecture that achieved the best adversarial attack and certified accuracy in~\cite{jia2019certified}.
\paragraph{Textual Entailment} In SNLI, each sample is composed of two sentence: one as the premise and the other as the hypothesis. The classification task is to define the relationship as an entailment, contradiction, or neutral. Following the implementation in~\cite{alzantot2018generating}, only words in hypothesis are allowed to be substituted. Similarly, we adopted the architecture that outperformed others in~\cite{jia2019certified} for evaluating different adversarial training approaches.

\begin{table}[t!]
\centering
  \begin{tabular}{lll}
  \toprule
  \textbf{Dataset}&\textbf{IMDb}&\textbf{SNLI}\\\midrule
  Task type&binary&three-class\\
  Training set size&20,000&550,152\\
  Testing set size&1000&10,000\\
  Total word count&11,856,015&4,614,822\\
  Vocabulary size&145,901&49,895\\
  Sentence length&263.46$\pm$195.29&8.25$\pm$3.20\\\bottomrule
  \end{tabular}
  \caption{Summary of dataset properties.}
  \label{tab:dataset_summary}
\end{table}

\subsection{Compared Approaches}
We compare robustness of the following two training approaches when adversarial examples are generated using metric-DP perturbation.

\paragraph{Certifiably Robust Trained Approach} Interval Bound Propagation (IBP) was leveraged to minimize the upper bound on the worst-case loss that any combination of word substitutions can induce. Specifically, an upper and lower bound on the activation of an neuron in each layer is computed based on the bounds of neurons in previous layers that connect to it. Bounds for the input layer is computed based on the smallest axis-aligned box that contains all the possible word substitutions, while the upper bound on the loss in the final layer is combined with the normal cross entropy loss to optimize the classification performance on the actual word and any other substitutions. The allowed substitutions are based on~\cite{alzantot2018generating}.

\paragraph{Augmented Training} we add the adversarial examples (four times of perturbations per sample) generated by metric differential privacy mechanisms into the training set and retain the model.

\subsection{Adversarial Attack Methodology~\label{sec:attack_methodology}}
Following~\cite{alzantot2018generating}, a population-based genetic attacker is implemented to search for perturbations that lead to misclassification from the model. Given an original or modified sentence, the attacker randomly substitutes a word from the sentence with a new one based on the perturbation mechanism satisfying metric DP. After multiple substitutions, the attacker obtains a population of new sentences together with their fitness scores (negatively proportional to the probability predicted for the correct label). 

If the new sentence with the highest fitness score successfully fools the model, then the attacker moves forward to the next sentence and starts a new round of testing. Otherwise, the attacker will perform crossover and mutation operations: sample two new sentences as parents from the population according to their fitness score, and then generate the child sentence by taking the word from either parent randomly. Another round of perturbation over the child sentence is then performed to further increase sentence diversity. The model is certified robust to after providing correct predictions over a predefined numbers of attacks.

\subsection{Evaluation Metrics}
Based on attributes of the testing set, different metrics are utilized to evaluate models' performance. 
\begin{itemize}
\item Clean Accuracy: the percentage of correct predictions when testing on the original samples.
\item Adversarial Accuracy: percent of correct predictions when testing on perturbed samples.
\end{itemize}
\subsection{Privacy Statistics of Metric DP Mechanisms}
In the context of privacy preservation, plausible deniability measures the likelihood of making correct inference given a sample perturbed by the privacy mechanism. 
Following~\cite{feyisetan2020privacy}, the following statistics are recorded to empirically evaluate the plausible deniability of the metric DP mechanisms at different values of $\epsilon$ (over $1,000$ experiment runs):
\begin{itemize}
\item $N_w$, measures the probability that a word does not get modified by the mechanism. This is approximated by counting the number of times an input word $w$ does not get replaced after running the mechanism $1,000$ times.
\item $S_w$, which is the number of distinct words that are produced as the output of $M(w)$. This is approximated by counting the number of distinct substitutions for an input word $w$ after running the mechanism $1,000$ times.
\end{itemize}

\begin{figure*}[t!]
\centering
\subfloat[Multivariate Laplace Mechanism on IMDb]{\label{fig:laplace_IMDB} \includegraphics[width=0.49\textwidth]{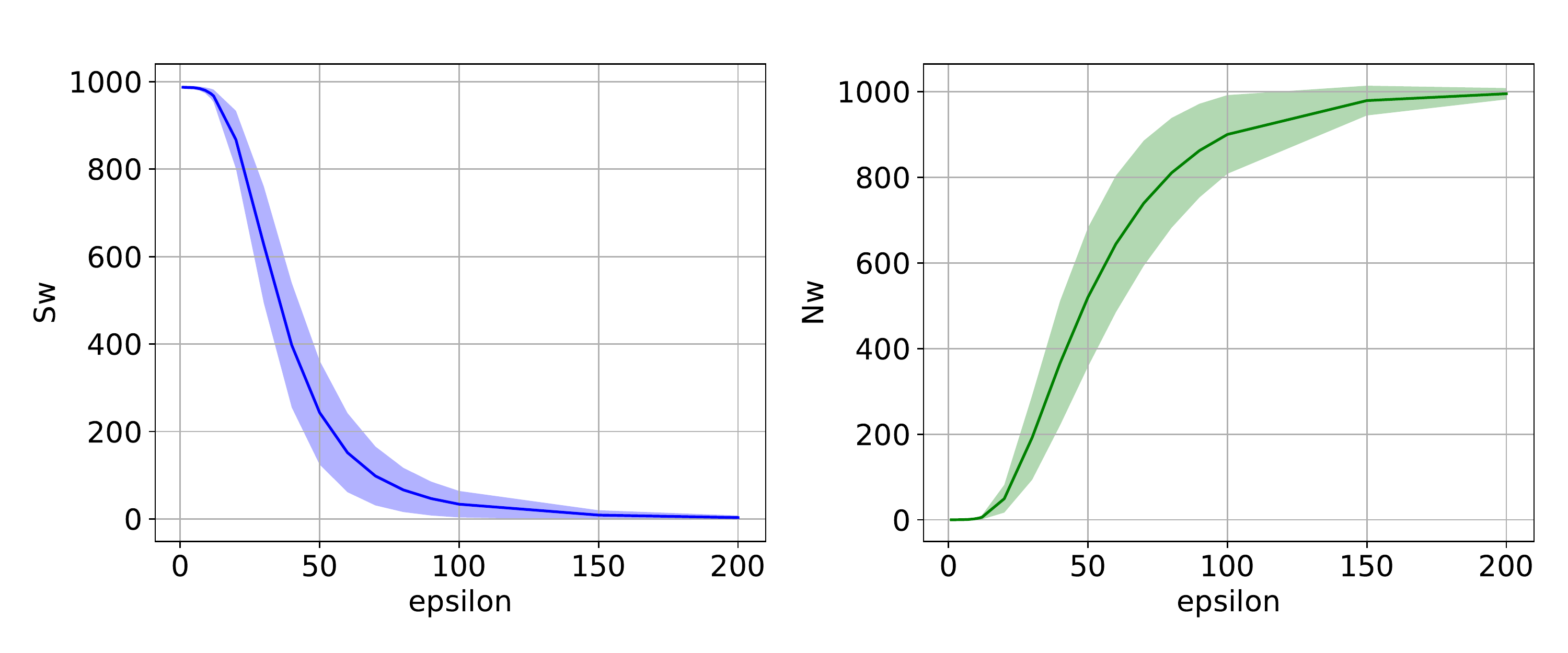}}
\subfloat[Multivariate Laplace Mechanism on SNLI]{\label{fig:laplace_SNLI} \includegraphics[width=0.49 \textwidth]{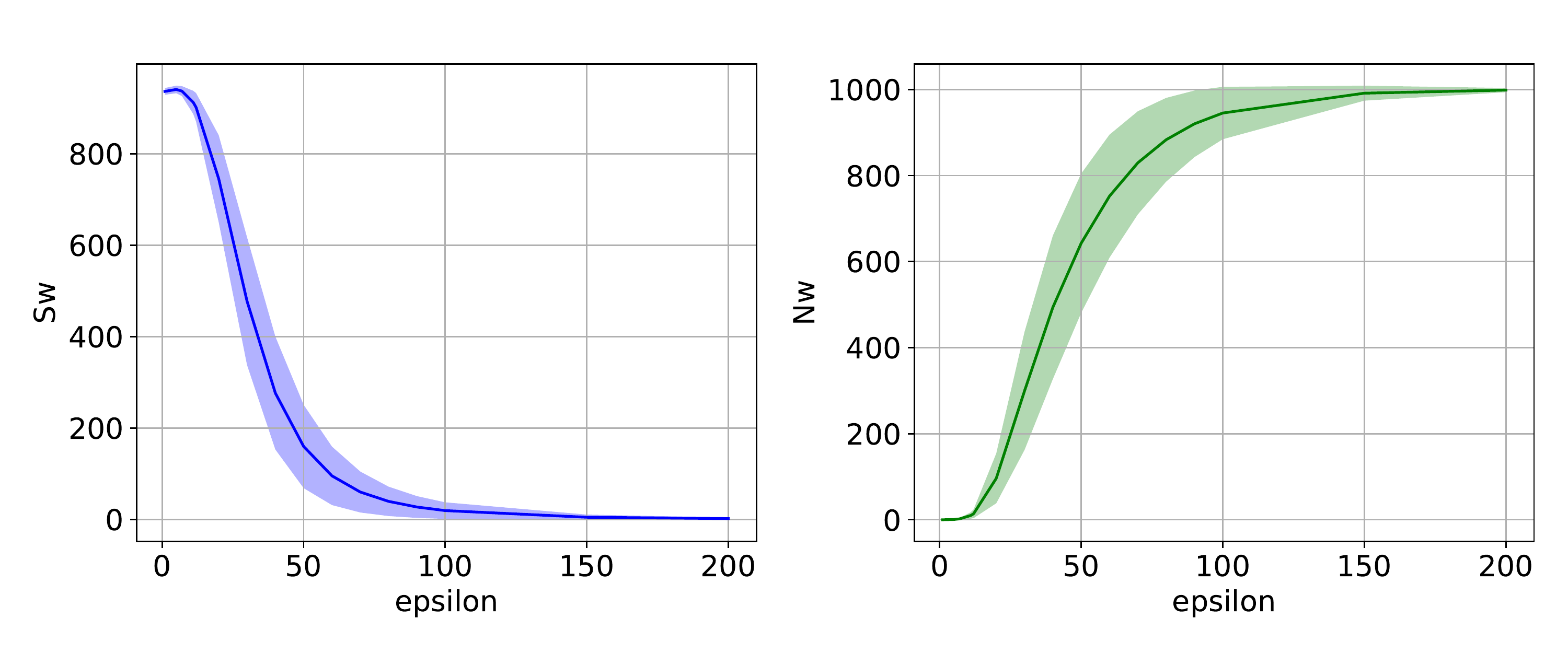}}\\
\subfloat[Truncated Gumbel Perturbation Mechanism on IMDb]{\label{fig:gumbel_IMDB} \includegraphics[width=0.49\textwidth]{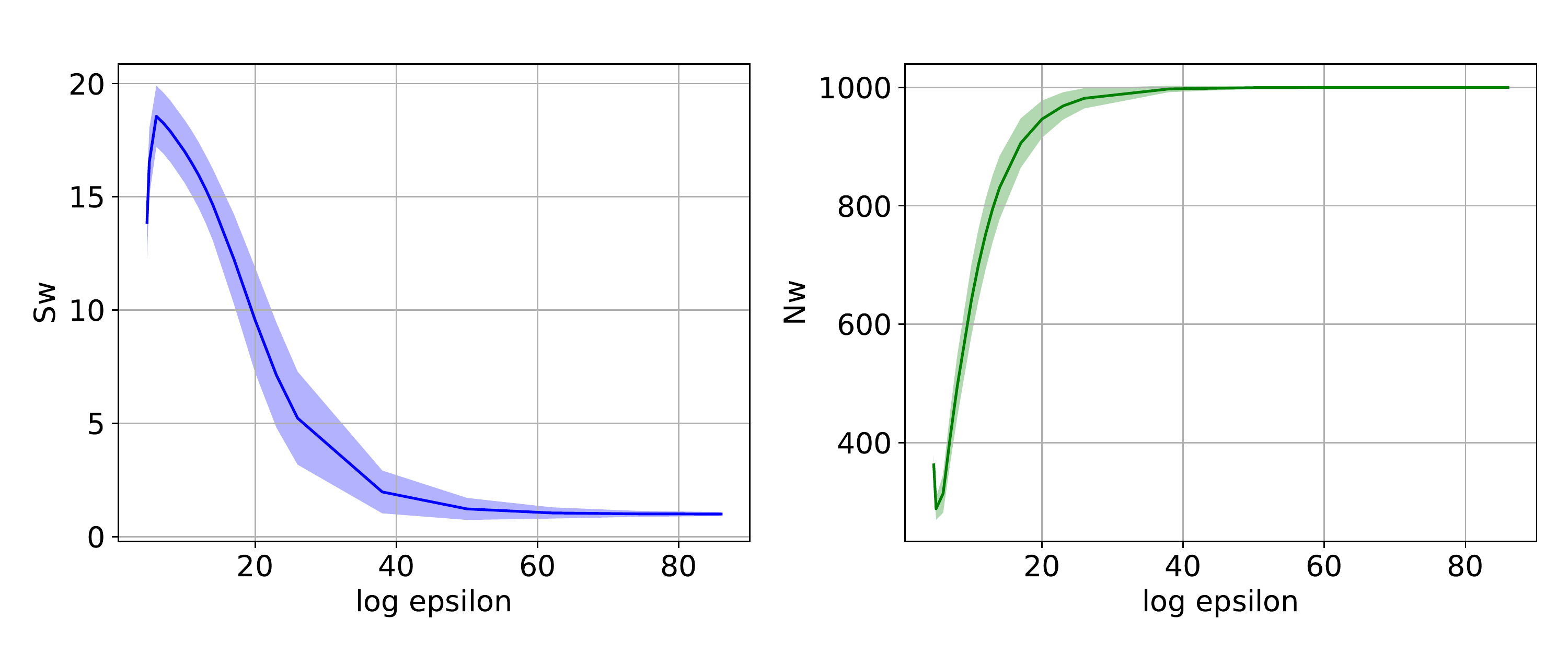}}
\subfloat[Truncated Gumbel Perturbation Mechanism on SNLI]{\label{fig:gumbel_SNLI} \includegraphics[width=0.49 \textwidth]{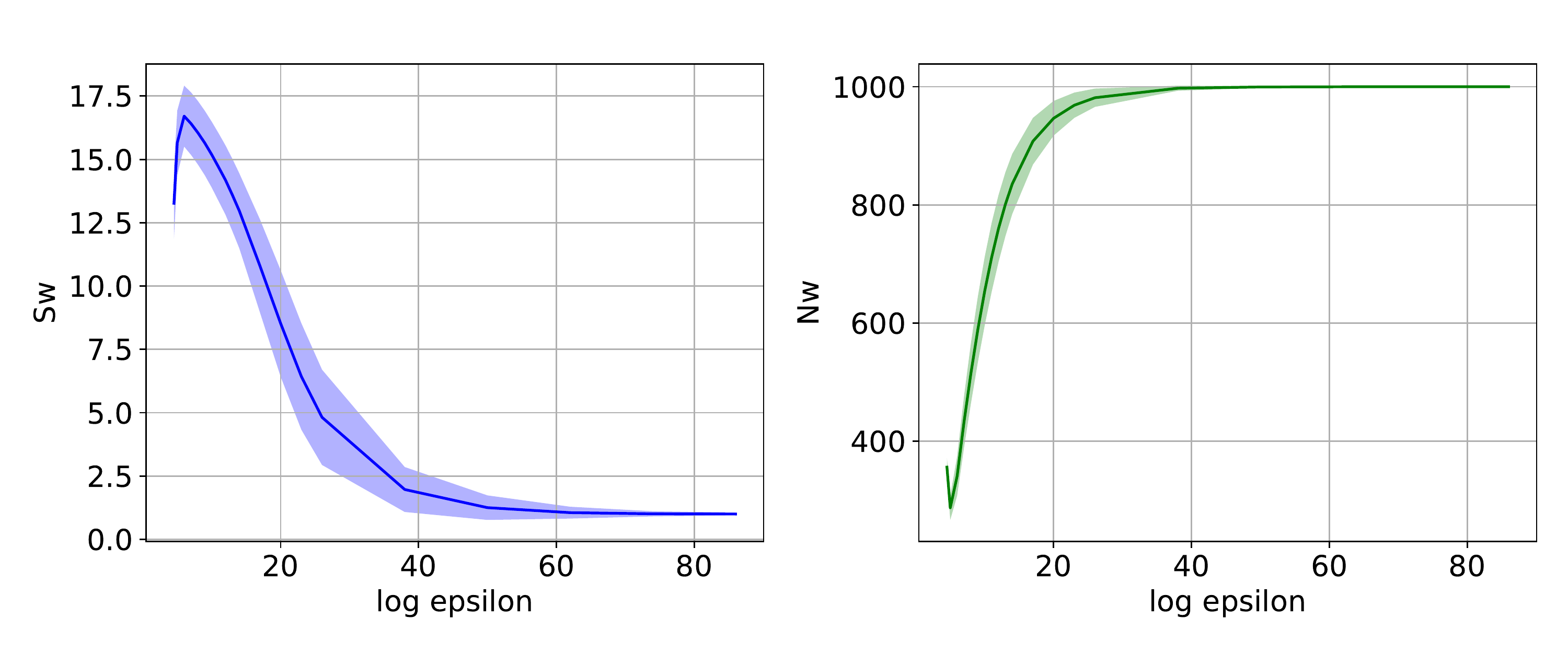}}
\caption{Empirical $S_w$ and $N_w$ statistics of Multivariate Laplace Mechanism and Truncated Gumbel Perturbation Mechanism on vocabularies from IMDb and SNLI. The average amount of the two measures is plotted as curves while the standard deviation is represented by shadows along the curve. Same plot patterns (curve and shadow) represent the same meaning ($\text{mean}\pm\text{std}$) in the following figures.}
\label{fig:plausible_deniability}
\end{figure*}

\paragraph{Plausible Deniability Analysis (Q1)} 
In Fig.~\ref{fig:plausible_deniability}, we observe similar trends on the two privacy statistic measures for both datasets.
When samples are perturbed by the multivariate Laplace mechanism (shown in Fig.~\ref{fig:laplace_IMDB} and Fig.~\ref{fig:laplace_SNLI}), the number of distinct substitutions $S_w$ decreases from $1,000$ to $0$ while the the times of maintaining the original word $N_w$ shows the opposite trend. The empirical values of the two measures are consistent with the definition of metric DP that the multivariate Laplace mechanisms satisfies i.e.,: $\epsilon\rightarrow0$ provides absolute privacy as the output produced by the mechanism becomes independent of the input word, while $\epsilon\rightarrow\infty$ results in null privacy where $M(w)=w$. 

There are two main differences between truncated Gumbel (demonstrated in Fig.~\ref{fig:gumbel_IMDB} and Fig.~\ref{fig:gumbel_SNLI}) and multivariate Laplace mechanism in privacy statistics: 1) minor increase or decrease in $\epsilon$ does not influence word substitutions produced by truncated Gumbel, hence variation of $S_w$ and $N_w$ is plotted against the logarithm value of $\epsilon$; 2) due to the effects of word substitutions among the top $k$ closest words in the vocabulary,  the maximum amount of distinct substitutions one word can have is around $20$ on IMDB and $17.5$ on SNLI.

\paragraph{Word Substitution Range Analysis (Q2)}
One main advantage of the proposed truncated Gumbel perturbation mechanism over the existing multivariate Laplace mechanism relies on the top-k closest words as substitutions, which helps preserve word semantics and improve utility of downstream ML tasks for words located in dense area of the embedding space. To show this property, we compare the amount of distinct word substitutions $S_w$ when the times of keeping the word unchanged $N_w$ is fixed in Fig~\ref{fig:Sw_against_Nw}. We discover that when different mechanisms result in the same perturbation effects, the multivariate Laplace mechanism has a much broader range of word substitutions compared with the proposed truncated Gumbel mechanism, which will probably raise problems in semantic preservation and result in poor performance on downstream tasks trained on the perturbed dataset.

\begin{figure}[h]
\centering
\subfloat[$S_w$ against $N_w$ on IMDB]{\label{fig:Sw_Nw_IMDB} \includegraphics[width=0.24\textwidth]{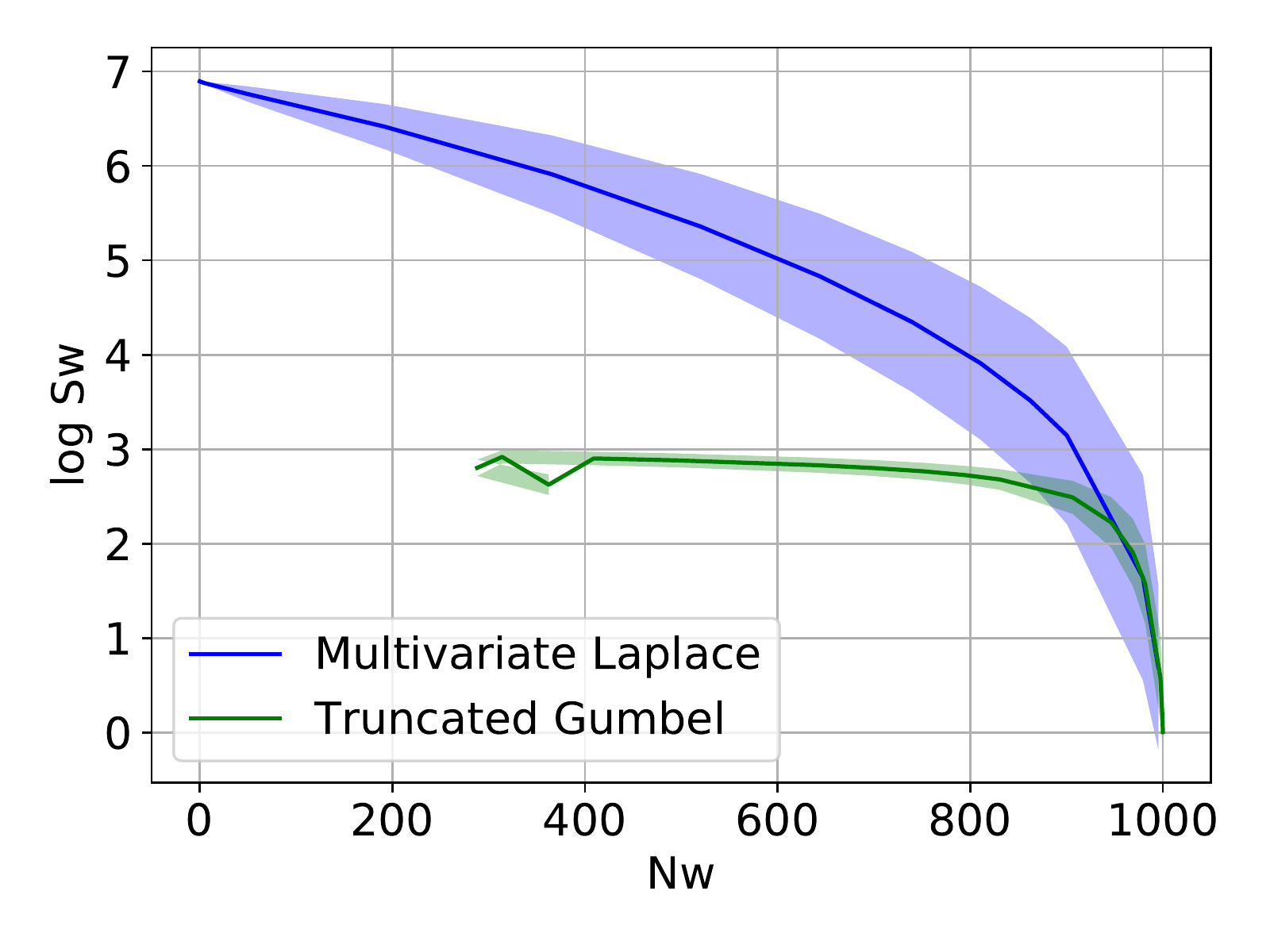}}
\subfloat[$S_w$ against $N_w$ on SNLI]{\label{fig:Sw_Nw_SNLI} 
\includegraphics[width=0.24 \textwidth]{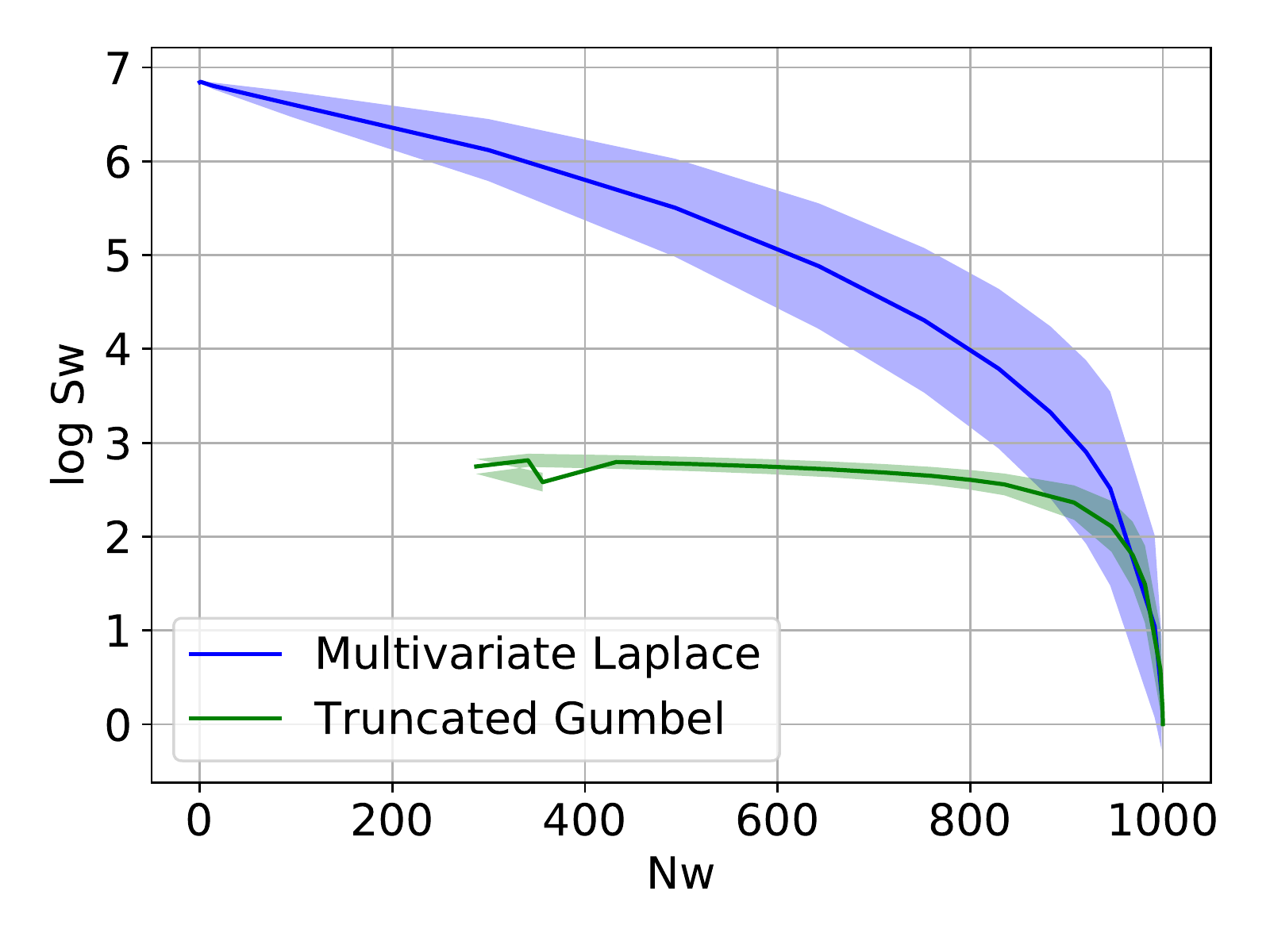}}
\caption{Word substitution range comparison (lower $S_w$ is better when $N_w$ is fixed). Due to the different scales of $S_w$ by the two mechanisms, the y-axis indicates the log value of $S_w$ for better visualization.}
\label{fig:Sw_against_Nw}
\end{figure}

\begin{table*}[t!]\small
\caption{Performance of adversarial training approaches on Text Data with(out) perturbations from \textbf{truncated gumbel perturbation mechanism}. Note that results are recorded when $\log\epsilon=4.67$ for IMDb and $\log\epsilon=4.52$ for SNLI, which are slightly larger than their respective lower bounds on $\epsilon$.}
\label{tab:gumbel_results}
\centering
\begin{tabular}{@{}cllllllllllll@{}}
\toprule
\multicolumn{3}{c}{$\log\epsilon$}                                       & \begin{tabular}[c]{@{}l@{}}4.67/\\ 4.52\end{tabular} & 10.00          & 14.00          & 17.00          & 23.00          & 38.00          & 50.00          & 62.00          & 74.00          & 86.00          \\ \midrule
\multirow{4}{*}{IMDb}                     & \multirow{2}{*}{Clean} & IBP & 81.00                                                & 81.00          & 81.00          & 81.00          & 81.00          & 81.00          & 81.00          & 81.00          & 81.00          & 81.00          \\\cmidrule(lr){3-13}
                                          &                        & Aug & \textbf{89.80}                                       & \textbf{89.60} & \textbf{88.10} & \textbf{90.00} & \textbf{88.30} & \textbf{89.20} & \textbf{89.00} & \textbf{89.40} & \textbf{89.80} & \textbf{89.70} \\\cmidrule(lr){2-13}
                                          & \multirow{2}{*}{Adv}   & IBP & \textbf{35.30}                                                & 34.60          & \textbf{47.40} & 58.60          & 70.90          & 79.90          & 80.80          & 80.90          & 80.90          & 81.00          \\\cmidrule(lr){3-13}
                                          &                        & Aug &32.00                                       & \textbf{34.90} & 43.30          & \textbf{60.20} & \textbf{71.80} & \textbf{86.20} & \textbf{88.80} & \textbf{89.30} & \textbf{89.70} & \textbf{89.70} \\ \midrule
\multicolumn{1}{l}{\multirow{4}{*}{SNLI}} & \multirow{2}{*}{Clean} & IBP & \textbf{79.19}                                       & 79.19          & 79.19          & 79.19          & 79.19          & 79.19          & 79.19          & 79.19          & 79.19          & 79.19          \\\cmidrule(lr){3-13}
\multicolumn{1}{l}{}                      &                        & Aug & 78.89                                                & \textbf{79.92} & \textbf{81.32} & \textbf{81.74} & \textbf{81.77} & \textbf{82.20} & \textbf{82.18} & \textbf{81.86} & \textbf{81.65} & \textbf{81.96} \\\cmidrule(lr){2-13}
\multicolumn{1}{l}{}                      & \multirow{2}{*}{Adv}   & IBP & 12.5                                                 & 11.49          & 12.98          & 14.95          & \textbf{24.01} & \textbf{58.78} & 74.51          & 78.18          & 78.88          & 79.12          \\\cmidrule(lr){3-13}
\multicolumn{1}{l}{}                      &                        & Aug & \textbf{21.05}                                       & \textbf{17.34} & \textbf{16.57} & \textbf{17.05} & 23.96          & 58.58          & \textbf{76.54} & \textbf{80.62} & \textbf{81.41} & \textbf{81.90} \\ \bottomrule
\end{tabular}
\end{table*}

\subsection{Model Robustness Against Metric DP Adversarial Samples (Q3)}
We list performance of the two adversarial training approaches when samples are perturbed by the multivariate Laplace mechanism in Table~\ref{tab:laplace_results} and the truncated Gumbel mechanism in Table~\ref{tab:gumbel_results}. 
 
 In Table~\ref{tab:laplace_results}, clean accuracy of the proposed augmented training approach is approximately $8.74\%$ higher than that of the certifiably robust trained approach IBP for any $\epsilon$ selection on IMDb and $3.33\%$ higher for $\epsilon\ge40$ on SNLI. Retraining with adversarial examples helps maintain the similar level of clean accuracy as the normal training approach, which is consistent with observations in literature~\cite{jia2017adversarial,iyyer2018adversarial,ribeiro2018semantically,belinkov2017synthetic,ebrahimi2017hotflip}. 
 When evaluating the model's robustness against word perturbations from the multivariate Laplace mechanism, the augmented training outperforms the IBP approach only when the $\epsilon$ value is larger than some threshold, e.g., $\epsilon>150$ on IMDb and $\epsilon>60$ on SNLI. This is expected as the augmented training cannot protect against all attacks especially when small values of $\epsilon$ results in any word substitution without considering semantic-preserving. In this case, the model can hardly learn the hidden relationship between the corrupted new texts and the original text label.

Given better semantic-preserving capability inherent in the proposed truncated Gumbel mechanism, the augmented training approach outperforms the certifiably robust trained IBP method in both clean and adversarial accuracy almost for any tested $\epsilon$ value tested. In Table~\ref{tab:gumbel_results}, improvement of clean accuracy by the augmented training approach over IBP is $9.87\%$ on IMDb and $3.77\%$ on SNLI when $\log \epsilon = 50$. At the same time, better performance against adversarial attacks is achieved by the augmented training approach: $9.90\%$ higher adversarial accuracy on IMDb and $2.72\%$ on SNLI.

One possible explanation of the inferior adversarial accuracy achieved by the certified defense approach IBP may be attributed to the training procedure, which is based on the word substitutions that preserve semantic meanings~\cite{alzantot2018generating}. However, the testing adversarial examples are generated by randomized perturbations from metric DP mechanisms, where the semantic meaning is not always preserved, but dynamically determined by the privacy parameter $\epsilon$. 

\section{Discussion and Conclusion}
We study the performance of different adversarial training approaches against adversarial examples generated by metric DP mechanisms. To better preserve semantic meanings during word perturbations, we propose a novel truncated Gumbel mechanism, which formally satisfies metric DP (see Appendix~\ref{sec:proofGumbel}). Empirically experiments demonstrate the advantage of the truncated Gumbel mechanism over the existing multivariate Laplace mechanism due to its smaller range of substitution candidates. In two textual classification tasks, retraining with adversarial examples performs better than the certified defence in both clean and adversarial accuracy.

We think the following aspects are interesting and deserve more investigations in the future: 1) robustness of other adversarial training approaches based on the metric DP-inspired adversarial examples, e.g., surrogate-loss minimization; 2) generalization capability of the well-trained augmented training approach, e.g., performance against other types of adversarial examples; 3) privacy preservation performance of the proposed truncated gumbel mechanism, e.g., performance of membership inference attacks (MIA) on perturbed texts.

\newpage
\balance
\bibliographystyle{acl_natbib}
\bibliography{emnlp2020}

\begin{thebibliography}{34}
\expandafter\ifx\csname natexlab\endcsname\relax\def\natexlab#1{#1}\fi

\bibitem[{Alzantot et~al.(2018)Alzantot, Sharma, Elgohary, Ho, Srivastava, and
  Chang}]{alzantot2018generating}
Moustafa Alzantot, Yash Sharma, Ahmed Elgohary, Bo-Jhang Ho, Mani Srivastava,
  and Kai-Wei Chang. 2018.
\newblock Generating natural language adversarial examples.
\newblock \emph{arXiv preprint arXiv:1804.07998}.

\bibitem[{Andr{\'e}s et~al.(2013)Andr{\'e}s, Bordenabe, Chatzikokolakis, and
  Palamidessi}]{andres2013geo}
Miguel~E Andr{\'e}s, Nicol{\'a}s~E Bordenabe, Konstantinos Chatzikokolakis, and
  Catuscia Palamidessi. 2013.
\newblock Geo-indistinguishability: Differential privacy for location-based
  systems.
\newblock In \emph{Proceedings of the 2013 ACM SIGSAC conference on Computer \&
  communications security}, pages 901--914.

\bibitem[{Belinkov and Bisk(2017)}]{belinkov2017synthetic}
Yonatan Belinkov and Yonatan Bisk. 2017.
\newblock Synthetic and natural noise both break neural machine translation.
\newblock \emph{arXiv preprint arXiv:1711.02173}.

\bibitem[{Bojanowski et~al.(2017)Bojanowski, Grave, Joulin, and
  Mikolov}]{bojanowski2017enriching}
Piotr Bojanowski, Edouard Grave, Armand Joulin, and Tomas Mikolov. 2017.
\newblock Enriching word vectors with subword information.
\newblock \emph{Transactions of the Association for Computational Linguistics},
  5:135--146.

\bibitem[{Bojarski et~al.(2016)Bojarski, Del~Testa, Dworakowski, Firner, Flepp,
  Goyal, Jackel, Monfort, Muller, Zhang et~al.}]{bojarski2016end}
Mariusz Bojarski, Davide Del~Testa, Daniel Dworakowski, Bernhard Firner, Beat
  Flepp, Prasoon Goyal, Lawrence~D Jackel, Mathew Monfort, Urs Muller, Jiakai
  Zhang, et~al. 2016.
\newblock End to end learning for self-driving cars.
\newblock \emph{arXiv preprint arXiv:1604.07316}.

\bibitem[{Bowman et~al.(2015)Bowman, Angeli, Potts, and
  Manning}]{bowman2015large}
Samuel~R Bowman, Gabor Angeli, Christopher Potts, and Christopher~D Manning.
  2015.
\newblock A large annotated corpus for learning natural language inference.
\newblock \emph{arXiv preprint arXiv:1508.05326}.

\bibitem[{Chatzikokolakis et~al.(2013)Chatzikokolakis, Andr{\'e}s, Bordenabe,
  and Palamidessi}]{chatzikokolakis2013broadening}
Konstantinos Chatzikokolakis, Miguel~E Andr{\'e}s, Nicol{\'a}s~Emilio
  Bordenabe, and Catuscia Palamidessi. 2013.
\newblock Broadening the scope of differential privacy using metrics.
\newblock In \emph{International Symposium on Privacy Enhancing Technologies
  Symposium}, pages 82--102. Springer.

\bibitem[{Chatzikokolakis et~al.(2015)Chatzikokolakis, Palamidessi, and
  Stronati}]{chatzikokolakis2015constructing}
Konstantinos Chatzikokolakis, Catuscia Palamidessi, and Marco Stronati. 2015.
\newblock Constructing elastic distinguishability metrics for location privacy.
\newblock \emph{Proceedings on Privacy Enhancing Technologies},
  2015(2):156--170.

\bibitem[{Durfee and Rogers(2019)}]{durfee2019practical}
David Durfee and Ryan~M Rogers. 2019.
\newblock Practical differentially private top-k selection with
  pay-what-you-get composition.
\newblock In \emph{Advances in Neural Information Processing Systems}, pages
  3532--3542.

\bibitem[{Dvijotham et~al.(2018)Dvijotham, Gowal, Stanforth, Arandjelovic,
  O'Donoghue, Uesato, and Kohli}]{dvijotham2018training}
Krishnamurthy Dvijotham, Sven Gowal, Robert Stanforth, Relja Arandjelovic,
  Brendan O'Donoghue, Jonathan Uesato, and Pushmeet Kohli. 2018.
\newblock Training verified learners with learned verifiers.
\newblock \emph{arXiv preprint arXiv:1805.10265}.

\bibitem[{Dwork et~al.(2006)Dwork, McSherry, Nissim, and
  Smith}]{dwork2006calibrating}
Cynthia Dwork, Frank McSherry, Kobbi Nissim, and Adam Smith. 2006.
\newblock Calibrating noise to sensitivity in private data analysis.
\newblock In \emph{Theory of cryptography conference}, pages 265--284.
  Springer.

\bibitem[{Ebrahimi et~al.(2017)Ebrahimi, Rao, Lowd, and
  Dou}]{ebrahimi2017hotflip}
Javid Ebrahimi, Anyi Rao, Daniel Lowd, and Dejing Dou. 2017.
\newblock Hotflip: White-box adversarial examples for text classification.
\newblock \emph{arXiv preprint arXiv:1712.06751}.

\bibitem[{Fernandes et~al.(2019)Fernandes, Dras, and
  McIver}]{fernandes2019generalised}
Natasha Fernandes, Mark Dras, and Annabelle McIver. 2019.
\newblock Generalised differential privacy for text document processing.
\newblock In \emph{International Conference on Principles of Security and
  Trust}, pages 123--148. Springer, Cham.

\bibitem[{Feyisetan et~al.(2020)Feyisetan, Balle, Drake, and
  Diethe}]{feyisetan2020privacy}
Oluwaseyi Feyisetan, Borja Balle, Thomas Drake, and Tom Diethe. 2020.
\newblock Privacy-and utility-preserving textual analysis via calibrated
  multivariate perturbations.
\newblock In \emph{Proceedings of the 13th International Conference on Web
  Search and Data Mining}, pages 178--186.

\bibitem[{Feyisetan et~al.(2019)Feyisetan, Diethe, and
  Drake}]{feyisetan2019leveraging}
Oluwaseyi Feyisetan, Tom Diethe, and Thomas Drake. 2019.
\newblock Leveraging hierarchical representations for preserving privacy and
  utility in text.
\newblock \emph{arXiv preprint arXiv:1910.08917}.

\bibitem[{Goodfellow et~al.(2014)Goodfellow, Shlens, and
  Szegedy}]{goodfellow2014explaining}
Ian~J Goodfellow, Jonathon Shlens, and Christian Szegedy. 2014.
\newblock Explaining and harnessing adversarial examples.
\newblock \emph{arXiv preprint arXiv:1412.6572}.

\bibitem[{Gowal et~al.(2018)Gowal, Dvijotham, Stanforth, Bunel, Qin, Uesato,
  Arandjelovic, Mann, and Kohli}]{gowal2018effectiveness}
Sven Gowal, Krishnamurthy Dvijotham, Robert Stanforth, Rudy Bunel, Chongli Qin,
  Jonathan Uesato, Relja Arandjelovic, Timothy Mann, and Pushmeet Kohli. 2018.
\newblock On the effectiveness of interval bound propagation for training
  verifiably robust models.
\newblock \emph{arXiv preprint arXiv:1810.12715}.

\bibitem[{Hoorfar and Hassani(2008)}]{hoorfar2008inequalities}
Abdolhossein Hoorfar and Mehdi Hassani. 2008.
\newblock Inequalities on the lambert w function and hyperpower function.
\newblock \emph{J. Inequal. Pure and Appl. Math}, 9(2):5--9.

\bibitem[{Iyyer et~al.(2018)Iyyer, Wieting, Gimpel, and
  Zettlemoyer}]{iyyer2018adversarial}
Mohit Iyyer, John Wieting, Kevin Gimpel, and Luke Zettlemoyer. 2018.
\newblock Adversarial example generation with syntactically controlled
  paraphrase networks.
\newblock \emph{arXiv preprint arXiv:1804.06059}.

\bibitem[{Jia and Liang(2017)}]{jia2017adversarial}
Robin Jia and Percy Liang. 2017.
\newblock Adversarial examples for evaluating reading comprehension systems.
\newblock \emph{arXiv preprint arXiv:1707.07328}.

\bibitem[{Jia et~al.(2019)Jia, Raghunathan, G{\"o}ksel, and
  Liang}]{jia2019certified}
Robin Jia, Aditi Raghunathan, Kerem G{\"o}ksel, and Percy Liang. 2019.
\newblock Certified robustness to adversarial word substitutions.
\newblock \emph{arXiv preprint arXiv:1909.00986}.

\bibitem[{Kober et~al.(2013)Kober, Bagnell, and
  Peters}]{kober2013reinforcement}
Jens Kober, J~Andrew Bagnell, and Jan Peters. 2013.
\newblock Reinforcement learning in robotics: A survey.
\newblock \emph{The International Journal of Robotics Research},
  32(11):1238--1274.

\bibitem[{Krizhevsky et~al.(2012)Krizhevsky, Sutskever, and
  Hinton}]{krizhevsky2012imagenet}
Alex Krizhevsky, Ilya Sutskever, and Geoffrey~E Hinton. 2012.
\newblock Imagenet classification with deep convolutional neural networks.
\newblock In \emph{Advances in neural information processing systems}, pages
  1097--1105.

\bibitem[{Lecuyer et~al.(2019)Lecuyer, Atlidakis, Geambasu, Hsu, and
  Jana}]{lecuyer2019certified}
Mathias Lecuyer, Vaggelis Atlidakis, Roxana Geambasu, Daniel Hsu, and Suman
  Jana. 2019.
\newblock Certified robustness to adversarial examples with differential
  privacy.
\newblock In \emph{2019 IEEE Symposium on Security and Privacy (SP)}, pages
  656--672. IEEE.

\bibitem[{Maas et~al.(2011)Maas, Daly, Pham, Huang, Ng, and
  Potts}]{maas2011learning}
Andrew Maas, Raymond~E Daly, Peter~T Pham, Dan Huang, Andrew~Y Ng, and
  Christopher Potts. 2011.
\newblock Learning word vectors for sentiment analysis.
\newblock In \emph{Proceedings of the 49th annual meeting of the association
  for computational linguistics: Human language technologies}, pages 142--150.

\bibitem[{Matyasko and Chau(2017)}]{matyasko2017margin}
Alexander Matyasko and Lap-Pui Chau. 2017.
\newblock Margin maximization for robust classification using deep learning.
\newblock In \emph{2017 International Joint Conference on Neural Networks
  (IJCNN)}, pages 300--307. IEEE.

\bibitem[{Mikolov et~al.(2013)Mikolov, Sutskever, Chen, Corrado, and
  Dean}]{mikolov2013distributed}
Tomas Mikolov, Ilya Sutskever, Kai Chen, Greg~S Corrado, and Jeff Dean. 2013.
\newblock Distributed representations of words and phrases and their
  compositionality.
\newblock In \emph{Advances in neural information processing systems}, pages
  3111--3119.

\bibitem[{Pennington et~al.(2014)Pennington, Socher, and
  Manning}]{pennington2014glove}
Jeffrey Pennington, Richard Socher, and Christopher~D Manning. 2014.
\newblock Glove: Global vectors for word representation.
\newblock In \emph{Proceedings of the 2014 conference on empirical methods in
  natural language processing (EMNLP)}, pages 1532--1543.

\bibitem[{Phan et~al.(2019{\natexlab{a}})Phan, Thai, Hu, Jin, Sun, and
  Dou}]{phan2019scalable}
NhatHai Phan, My~T Thai, Han Hu, Ruoming Jin, Tong Sun, and Dejing Dou.
  2019{\natexlab{a}}.
\newblock Scalable differential privacy with certified robustness in
  adversarial learning.
\newblock \emph{arXiv preprint arXiv:1903.09822}.

\bibitem[{Phan et~al.(2019{\natexlab{b}})Phan, Vu, Liu, Jin, Dou, Wu, and
  Thai}]{phan2019heterogeneous}
NhatHai Phan, Minh Vu, Yang Liu, Ruoming Jin, Dejing Dou, Xintao Wu, and My~T
  Thai. 2019{\natexlab{b}}.
\newblock Heterogeneous gaussian mechanism: Preserving differential privacy in
  deep learning with provable robustness.
\newblock \emph{arXiv preprint arXiv:1906.01444}.

\bibitem[{Pinot et~al.(2019)Pinot, Yger, Gouy-Pailler, and
  Atif}]{pinot2019unified}
Rafael Pinot, Florian Yger, C{\'e}dric Gouy-Pailler, and Jamal Atif. 2019.
\newblock A unified view on differential privacy and robustness to adversarial
  examples.
\newblock \emph{arXiv preprint arXiv:1906.07982}.

\bibitem[{Ribeiro et~al.(2018)Ribeiro, Singh, and
  Guestrin}]{ribeiro2018semantically}
Marco~Tulio Ribeiro, Sameer Singh, and Carlos Guestrin. 2018.
\newblock Semantically equivalent adversarial rules for debugging nlp models.
\newblock In \emph{Proceedings of the 56th Annual Meeting of the Association
  for Computational Linguistics (Volume 1: Long Papers)}, pages 856--865.

\bibitem[{Szegedy et~al.(2013)Szegedy, Zaremba, Sutskever, Bruna, Erhan,
  Goodfellow, and Fergus}]{szegedy2013intriguing}
Christian Szegedy, Wojciech Zaremba, Ilya Sutskever, Joan Bruna, Dumitru Erhan,
  Ian Goodfellow, and Rob Fergus. 2013.
\newblock Intriguing properties of neural networks.
\newblock \emph{arXiv preprint arXiv:1312.6199}.

\bibitem[{Wu et~al.(2017)Wu, Li, Kumar, Chaudhuri, Jha, and
  Naughton}]{wu2017bolt}
Xi~Wu, Fengan Li, Arun Kumar, Kamalika Chaudhuri, Somesh Jha, and Jeffrey
  Naughton. 2017.
\newblock Bolt-on differential privacy for scalable stochastic gradient
  descent-based analytics.
\newblock In \emph{Proceedings of the 2017 ACM International Conference on
  Management of Data}, pages 1307--1322.

\end{thebibliography}

\newpage
\appendix
\section{Privacy Proof for Truncated Gumbel Mechanism}\label{sec:proofGumbel}
\begin{theorem} \label{thm:gumbel_privacy}
The truncated Gumbel perturbation mechanism, defined in Algorithm~\ref{alg:gumbel_perturbation}, is $\epsilon d_\chi$-private with respect to the Euclidean metric.
\end{theorem}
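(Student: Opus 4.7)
My plan is to reduce the claim to a per-word privacy guarantee and then invoke sequential composition of metric DP across the $\ell$ positions of the input string (the word-level Euclidean distances sum to the string-level metric). For each individual word, the mechanism first samples a candidate-set size $k$ from the truncated Poisson and then runs \gumbelMech{} on the top-$k$ nearest neighbours, so the per-word analysis naturally splits into (i) a fixed-candidate-set Gumbel ratio and (ii) a reconciliation step for the case when the top-$k$ sets of the two inputs $w, w'$ differ.

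For step (i), I would fix a candidate set $V$ and write $q_j(\mathbf{d}) = \Pr[\arg\min_i(d_i + g_i) = j]$ as an integral over the truncated Gumbel density of $g_j$ times the survival functions of the remaining coordinates. Comparing $q_j(\mathbf{d})$ with $q_j(\mathbf{d}')$ for $\|\mathbf{d} - \mathbf{d}'\|_\infty \leq \eta := d(w, w')$, the change of variable $g_j \mapsto g_j + (d_j - d_j')$ reduces the ratio to a pointwise density ratio of the shifted Gumbel, which is controlled using the explicit exponential form of the density. The truncation at $C = \Delta$ is critical: it keeps the shifted variable inside a bounded window so the ratio is uniform in $g_j$, and the window is wide enough (the embedding-space diameter) that the induced truncation mass is negligible. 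This yields a ratio bound of the form $q_j(\mathbf{d}) \leq e^{\alpha \eta}\, q_j(\mathbf{d}')$, with $\alpha$ controlled by $1/b$; the formula $b = 2\Delta/\min\{W(2\alpha\Delta),\log_e(\alpha\Delta_0)\}$ is essentially the inverse Lambert-$W$ identity that solves for the smallest $b$ consistent with this bound.

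For step (ii), I would write $\Pr[M_{\text{word}}(w) = \tilde{w}] = \sum_k \Pr[K = k]\, q_w(\tilde{w} \mid \text{top-}k(w))$ with the summand vanishing when $\tilde{w} \notin \text{top-}k(w)$. The subtle case is when $\tilde{w}$ lies in top-$k(w)$ but not top-$k(w')$. Here I would invoke the safety-net property of the truncated Poisson, $\Pr[K = |\mathcal{W}|] > e^{-\log|\mathcal{W}|} = 1/|\mathcal{W}|$, so the $k = |\mathcal{W}|$ term is always in the support and yields the strict lower bound $\Pr[M_{\text{word}}(w') = \tilde{w}] \geq \tfrac{1}{|\mathcal{W}|}\, q_{w'}(\tilde{w}\mid \mathcal{W})$. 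Any rank flip between $w$ and $w'$ forces some boundary word to lie within $\eta$ of one of them, and since distinct embeddings are at least $\Delta_0$ apart, a flip can happen only when $\eta$ is commensurate with $\Delta_0$; this lets me convert the otherwise constant $\log|\mathcal{W}|$ loss from the safety net into the $\eta$-proportional term $\tfrac{2(1+\log|\mathcal{W}|)}{\Delta_0}\,\eta$. Adding the three contributions ((i), the set-shift term, and a Gumbel-density correction coming from the shifted support) gives a total per-word loss of $3\alpha\eta + \tfrac{2(1+\log|\mathcal{W}|)}{\Delta_0}\eta$, and the choice $\alpha = \tfrac{1}{3}(\epsilon - \tfrac{2(1+\log|\mathcal{W}|)}{\Delta_0})$ is precisely what makes this equal to $\epsilon\eta$.

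The main obstacle is step (ii): the support of $q_w(\cdot \mid k)$ genuinely depends on $w$, so a pointwise density comparison alone cannot control the ratio when the top-$k$ candidate sets disagree. The argument has to combine the Poisson safety net with the $\Delta_0$ lower bound on inter-word distances in order to convert an otherwise constant ``set-difference'' contribution into a multiplicative factor proportional to $\eta$. Verifying that the Gumbel contribution, the set-shift correction, and the safety-net term exactly sum to $\epsilon\eta$ with no slack---which is what forces the particular constants appearing in $\alpha$ and $b$---is where most of the technical bookkeeping will concentrate.
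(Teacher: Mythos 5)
Your proposal follows essentially the same route as the paper's proof: a per-word analysis composed over positions, a fixed-candidate-set Gumbel ratio bound scaled by $1/b$ and the diameter $\Delta$, the truncated-Poisson ``safety net'' $\Pr(K=|\mathcal{W}|)\ge e^{-\lambda}$ to handle the mismatch in candidate supports, conversion of the resulting constant loss into a $d(w,w')$-proportional term via the minimum inter-word distance $\Delta_0$, and the Lambert-$W$ inversion to fix $b$. The only cosmetic difference is that you control the fixed-set ratio by a change of variables on the Gumbel density, whereas the paper evaluates the Gumbel CDF at the threshold $g^*=\min_{j\ne i}(d_j+g_j)-d_i$; both yield the same $\frac{2}{b}e^{2\Delta/b}d(w,w')$ bound.
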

\begin{proof}
We first show for any pairs of substitutable words w and w',
$$\frac{\Pr[M(w)=u_i|K=n]}{\Pr(M(w')=u_i|K=n]}\leq\exp\biggl[\frac{2}{b}e^{\frac{2}{b}\Delta} d(w,w')\biggr],$$
where $n=|\mathcal{W}|$ and $d(w,w')=\|\phi(w)-\phi(w')\|_2$.
Conditional on $K=n$, 
$$\Pr(M(w)=u_i|K=n)=\Pr(d_i+g_i<\min_{j\ne i}d_j+g_j).$$ 
Since $g_1,\ldots,g_n$ are $i.i.d.$ random variables, we argue for each $i$ independently. 
Fix $g_{-i}=[g_1,\ldots,g_{i-1},g_{i+1},\ldots,g_n]$ as a random draw from $n-1$ independent Gumbel distributions. 
Define $g^*=\sup g: d_i + g < \min_{j\ne i}d_j + g_j.$ Then  $g_i<\min_{j\ne i}(d_j+g_j)-d_i$ if and only if $g_i\leq g^*$, which means 
$M(w)=u_i$ if and only if $g_i\leq g^*$.
Now consider another substitutable word $w'$ with a corresponding distance vector $\mathbf{d}'=[d_1',\ldots,d_n']$.
By triangle inequality, we have 
$$|d_i-d_i'|\leq d(w,w'), \textrm{ for }i=1,.\ldots,n.$$
Therefore,
\begin{align*}
    &\Pr(M(w')=u_i|K=n)\\
    =&\Pr(d_i'+g_i<\min_{j\ne i} (d_j'+g_j) )\\
    =&\Pr(g_i<\min_{j\ne i}(d_j'+g_j)-d_i')\\
    =&\Pr(g_i < \min_{j\ne i}(d_j+g_j)-d_i+2d(w,w')\\
    =&\Pr(g_i \leq g^*+2d(w,w')).
\end{align*}
Therefore,
\begin{align*}
&\frac{\Pr(M(w)=u_i|K=n)}{\Pr(M(w')=u_i|K=n)}\\
\geq&\frac{\Pr(g_i\leq g^*)}{\Pr(g_i\leq g^*+2d(w,w'))}\\
=&\frac{\exp(-e^{-\frac{1}{b}g^*})}{\exp(-e^{-\frac{1}{b}g^*-\frac{2}{b} d(w,w')})}\\
=&\exp[-e^{-\frac{1}{b}g^*}(1-e^{-\frac{2}{b} d(w,w')})],
\end{align*}
which is increasing in $g^*$ as 
$1-e^{-\frac{2}{b} d(w,w')}>0$.
Since $g^*\geq-2\Delta$, 
and  then
\begin{align*}
    &\frac{\Pr(M(w)=u_i|K=n)}{\Pr(M(w')=u_i|K=n)}\\
    \geq & \exp(-e^{-\frac{1}{b}(-2\Delta)}(1-e^{-\frac{2}{b} d(w,w')})) \\
    \geq &\exp\biggl[-e^{\frac{2}{b}\Delta}\cdot\frac{2}{b} d(w,w')\biggr].
\end{align*} 
By symmetry of $w$ and $w'$, we also have
$$\frac{\Pr(M(w)=u_i|K=n)}{\Pr(M(w')=u_i|K=n)}\leq\exp\biggl[\frac{2}{b}e^{\frac{2}{b}\Delta} d(w,w')\biggr].$$

Recall that $K \sim \poisson{\lambda;1,n}$. We want to show an upper bound for $\frac{\Pr(M(w)=u_i)}{\Pr(M(w')=u_i)}$, 
which is
\begin{align*}
   & \frac{\Pr(M(w)=u_i)}{\Pr(M(w')=u_i)} \\
   =&\frac{\sum_{k=1}^n\Pr(M(w)=u_i|K=k)\Pr(K=k)}{\sum_{k=1}^n\Pr(M(w')=u_i|K=k)\Pr(K=k)}\\
   \leq &\frac{\sum_{k=1}^n\Pr(M(w)=u_i|K=k)\Pr(K=k)}{\Pr(M(w')=u_i|K=n)\Pr(K=n)}\\
   \leq & \frac{n-1+\Pr(M(w)=u_i|K=n)\Pr(K=n)}{\Pr(M(w')=u_i|K=n)\Pr(K=n)},
 \end{align*}
Since 
\begin{align*}
    \Pr(M(w)=u_i|K=n)=&\exp(-e^{-\frac{1}{b}g^*})\\
    \geq&\exp(-e^{\frac{2\Delta}{b}}),
\end{align*}
and $\Pr(K=n)\geq e^{-\lambda}$ (from Definition ~\ref{def:poisson}),
\begin{align*}
   & \frac{\Pr(M(w)=u_i)}{\Pr(M(w')=u_i)} \\
   \leq & \exp\biggl(\frac{2}{b}e^{\frac{2}{b}\Delta}d(w,w')\biggr)\biggl(1 + 
   \frac{n-1}{\exp(-e^{\frac{2\Delta}{b}}-\lambda)}\biggr)\\
   =&\biggl(1+(n-1)e^{e^{\frac{2\Delta}{b}}+\lambda}\biggr)
   \exp\biggl(\frac{2}{b}e^{\frac{2}{b}\Delta}d(w,w')\biggr)\\
   \leq & 2n\exp(e^{\frac{2\Delta}{b}}+\lambda)\exp\biggl(\frac{2}{b}e^{\frac{2}{b}\Delta}d(w,w')\biggr)
\end{align*}

In order to guarantee $\epsilon$ $d_\chi$-privacy, we solve for $b$ using
\begin{equation*}
    e^{\epsilon d(w,w')}\geq 2n\exp(e^{\frac{2\Delta}{b}}+\lambda)
   \exp\biggl(\frac{2}{b}e^{\frac{2}{b}\Delta}d(w,w')\biggr).
\end{equation*}
Taking logarithm on both sides,
\begin{align*}
    &\epsilon 
    \geq \frac{1}{d(w,w')}\log_e\biggl(
2n \exp(e^{\frac{2\Delta}{b}}+\lambda) \biggr)
+\frac{2}{b}e^{\frac{2}{b}\Delta},
\end{align*}
so we need to find an upper bound for the right-hand side of the equation as a function of $b$.
\begin{align*}
    & \frac{1}{d(w,w')}\log_e\biggl(
2n \exp( e^{\frac{2\Delta}{b}}+\lambda )\biggr)
+\frac{2}{b}e^{\frac{2}{b}\Delta}\\
\leq&\frac{1}{\Delta_0}\biggl(2+
\log n  + e^{\frac{2\Delta}{b}}+\lambda \biggr)
+\frac{2}{b}e^{\frac{2}{b}\Delta} \\
=&\frac{2+\log n+\lambda}{\Delta_0} + \biggl(\frac{1}{\Delta_0}+\frac{2}{b}\biggr)e^{\frac{2}{b}\Delta},
\end{align*}
which is decreasing in $b$.
When $b\leq\Delta_0$, 
\begin{align*}
    &\frac{2+\log n+\lambda}{\Delta_0} + \biggl(\frac{1}{\Delta_0}+\frac{2}{b}\biggr)e^{\frac{2}{b}\Delta}\\
    \leq &\frac{2+\log n+\lambda}{\Delta_0} +\frac{3}{b}e^{\frac{2}{b}\Delta},
\end{align*}
it is sufficient to set
$$b = \frac{2\Delta}{W \biggl(\frac{2\Delta}{3}
\biggl(\epsilon-\frac{2+\log n+\lambda}{\Delta_0} \biggr)\biggr)},$$
where $W$ is Lambert-W function.
When $b>\Delta_0$,
\begin{align*}
    &\frac{2+\log n+\lambda}{\Delta_0} + \biggl(\frac{1}{\Delta_0}+\frac{2}{b}\biggr)e^{\frac{2}{b}\Delta}\\
    \leq &\frac{2+\log n+\lambda}{\Delta_0} +\frac{3}{\Delta_0}e^{\frac{2}{b}\Delta},
\end{align*}
it is sufficient to set
$$b = \frac{2\Delta}{\log_e \biggl(\frac{\Delta_0}{3}
\biggl(\epsilon-\frac{2+\log n+\lambda}{\Delta_0} \biggr)\biggr)}.$$
Thus, a sufficient condition for 
\begin{align*}
    &\epsilon
    \geq \frac{1}{d(w,w')}\log_e\biggl(
2n \exp(e^{\frac{2\Delta}{b}}+\lambda) \biggr)
+\frac{2}{b}e^{\frac{2}{b}\Delta},
\end{align*}
is to set $b$ to be
\begin{align*}
  \max\biggl(&\frac{2\Delta}{W \biggl(\frac{2\Delta}{3}
\biggl(\epsilon-\frac{2+\log n+\lambda}{\Delta_0} \biggr)\biggr)},\\
&\frac{2\Delta}{\log_e \biggl(\frac{\Delta_0}{3}
\biggl(\epsilon-\frac{2+\log n+\lambda}{\Delta_0} \biggr)\biggr)}
\biggr).  
\end{align*}

Now that we have proved the proposed mechanism $M$ is $\epsilon$ $d_\chi$-private with respect to Euclidean metric $d$ on a string of one word, we have for any pair of inputs $w,w'\in\mathcal{W}^\ell$ and any output $u\in\mathcal{W}^\ell$,
\begin{align*}
    &\frac{\Pr(M(w)=u)}{\Pr(M(w')=u)} = \prod_{i=1}^\ell\biggl( \frac{\Pr(M(w_i)=u_i)}{\Pr(M(w'_i)=u_i)}\biggr)\\ &\leq \prod_{i=1}^\ell\exp(\epsilon d(w_i,w_i')) = \exp(\epsilon d(w,w')),
\end{align*}
where $d(w,w')=\sum_{i=1}^\ell d(w_i, w_i')$.\qedhere
\end{proof}

For Algorithm~\ref{alg:gumbel_perturbation}, we set $\lambda = \log |\mathcal{W}|$, so that the value of $b$ used is the following:
\begin{align*}
    b =  \max\biggl(&\frac{2\Delta}{W \biggl(\frac{2\Delta}{3}
\biggl(\epsilon-\frac{2+2\log |\mathcal{W}|}{\Delta_0} \biggr)\biggr)},\\
&\frac{2\Delta}{\log_e \biggl(\frac{\Delta_0}{3}
\biggl(\epsilon-\frac{2+2\log |\mathcal{W}|}{\Delta_0} \biggr)\biggr)}
\biggr)
\end{align*}
For this value of $b$ to be defined, we must ensure that $\epsilon$ is set in a way that the logarithm and Lambert-$W$ function in the denominator has a positive argument. This holds whenever the following is true:
\begin{align*}
    \epsilon &> \frac{2 \paran{1+\log |\mathcal{W}|}}{\Delta_0}.
\end{align*}
For IMDB dataset, we have $|\mathcal{W}| = 48210$, and that for the SNLI dataset is $|\mathcal{W}| = 11673$. Using $\Delta_0 = 0.2208$ and $0.2263$ for IMDB and SNLI, respectively, the lower bounds for $\epsilon$ we obtain are $106.73$ and $91.604$, respectively.

\section{Fraction of Modified Words}
\begin{lemma} For given $\epsilon > 0$, string $x = w_1 \dots w_{\ell
}$ and any fixed $k$, the expected fraction of words that get modified using Algorithm~\ref{alg:gumbel_perturbation} is at least $(1-p)$, where $p = \exp\paran{-e^{-\frac{2\Delta}{b}}}$. In particular, $\mathbb{E}(N_w) \leq p|\mathcal{W}|$.
\end{lemma}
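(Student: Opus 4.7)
The plan is to reduce the non-modification event at each position to a single-coordinate Gumbel argmin comparison, mirroring the argument from Theorem~\ref{thm:gumbel_privacy}. Because Algorithm~\ref{alg:gumbel_perturbation} always places the original word $w_i$ as the first candidate $u_1$, we have $d_1 = 0$, and $w_i$ is returned unchanged precisely when $\gumbelMech$ picks index $1$, i.e., when $g_1 \leq d_j + g_j$ for every $j \in \{2, \dots, k\}$. Setting $g^* := \min_{j \geq 2}(d_j + g_j)$, this event reduces to the single inequality $\{g_1 \leq g^*\}$, which, conditional on the other noise terms and any fixed $k$, has probability $F(g^*)$, where $F(x) = \exp(-e^{-x/b})$ is the Gumbel CDF.

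Next I would produce a deterministic upper bound on $g^*$ and exploit monotonicity of $F$. Both $d_j \leq \Delta$ (by the diameter definition) and $g_j \leq C = \Delta$ (by the truncation used inside Algorithm~\ref{alg:gumbelargmin}) hold for every candidate, so $d_j + g_j \leq 2\Delta$ and in particular $g^* \leq 2\Delta$. Monotonicity of $F$ then yields $\Pr(w_i \text{ unchanged} \mid g_{-1}, k) \leq F(2\Delta) = \exp(-e^{-2\Delta/b}) = p$, and taking expectation over $g_{-1}$ preserves the inequality. Linearity of expectation across the $\ell$ positions in the input string then gives the claimed lower bound $(1-p)$ on the expected fraction of modified words, and the in-particular claim $\mathbb{E}(N_w) \leq p|\mathcal{W}|$ follows by summing the per-word non-modification probability over the full vocabulary.

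The main technical obstacle will be justifying the CDF identity under truncation. Because $g_1$ is truncated to $[-\Delta,\Delta]$, the identity $\Pr(g_1 \leq t) = F(t)$ holds exactly only for $t$ inside the window, whereas the deterministic bound $g^* \leq 2\Delta$ can fall outside it. I would resolve this either by rerunning the argument with the truncated Gumbel CDF $F_{\mathrm{trunc}}$ in place of $F$ and re-optimising the constant, or by observing that the stated bound is really the one obtained when the truncation parameter is taken large enough (say $C \geq 2\Delta$) so the deterministic bound lies safely inside the support and the analysis above goes through unchanged.
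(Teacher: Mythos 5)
Your argument is essentially the paper's own proof: reduce non-modification at position $i$ to the event $g_1 \le g_1^* = \min_{j\ge 2}(d_j+g_j)$, bound $g_1^*\le 2\Delta$ deterministically via $d_j\le\Delta$ and $g_j\le C$, invoke monotonicity of the Gumbel CDF to get the per-word bound $p=\exp(-e^{-2\Delta/b})$, and finish with linearity of expectation over the string and a sum over the vocabulary for $\mathbb{E}(N_w)$. The truncation caveat you flag is genuine, but it is equally present in the paper's proof, which likewise evaluates the untruncated CDF $\exp(-e^{-g_1^*/b})$ at a point that can lie outside the truncation window $[-C,C]$ and does not address the discrepancy, so your proposed repair via the truncated CDF is extra care the paper itself omits.
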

\begin{proof}
Fix a word $w_i \in x$. Since $u_1 = w_i$, observe that we can write the probability that it does not get modified as $\Pr\paran{\widetilde{w_i} = u_1} = \Pr\paran{g_1 < \min_{j \geq 2} \paran{d_j + g_j}}$. Let $g_1^* = \sup g: g < \min_{j \geq 2} \paran{d_j + g_j}$. Then, similar to the proof of Theorem~\ref{thm:gumbel_privacy}, $g_1 < \min_{j \geq 2} \paran{d_j + g_j}$ if and only if $g_1 \le g_1^*$. This gives $\Pr\paran{\widetilde{w_i} = u_1} = \Pr\paran{g_1 \le g_1^*} = \exp\paran{-e^{-g_1^*/b}}$. Since $g_1^* \le 2\Delta$, we can write $\Pr\paran{\widetilde{w_i}= u_1} \le \exp\paran{-e^{-\frac{2\Delta}{b}}}$. 

Thus, the expected fraction of words in $x$ that do not get modified is at most $p$, where $p = \exp\paran{-\exp\paran{-\frac{2\Delta}{b}}}$. From this, we compute the expected fraction of words that get modified as at least $(1-p)$, as desired. The bound on $\mathbb{E}(N_w)$ follows from a simple union bound over all the words in the vocabulary.
\end{proof}

Note that $\frac{\partial p}{\partial b}=\frac{\partial}{\partial b}\exp\paran{-e^{-\frac{2\Delta}{b}}} < 0$, and hence, $p$ is a decreasing function in $b$, implying that as the privacy increases ($b$ increases), the value of $\mathbb{E}(N_w)$ decreases, as expected.

\section{Utility Analysis vs. Sparsity of the Embedding Space}\label{sec:utility_sparsity}
We want to analyze how word substitution works for Gumbel vs. Laplace for different embedding densities. Given a word $w \in \mathcal{W}$ in the vocabulary, we let $\delta(w) = \min_{\substack{w' \in \mathcal{W}\\w \neq w'}} d(w,w')$ denote the distance to the closest word to $w$ in the embedding space. For the same value of $\epsilon$, let $n_{\textsf{Lap}} \sim \textsf{Lap}\paran{\frac{2}{\epsilon}}$ be the amount of Laplace noise added to perturb the word, and $p_{\textsf{Lap}}(w)$ be the probability that the event $\xi_w: \ \arg\min_{w'\in \mathcal{W}} \paran{||w' - \paran{w + n_{\textsf{Lap}}}||_2} = w$ (\emph{i.e.} the word remains unchanged). Then, we can compute this probability as follows:
\begin{align*}
    p_{\textsf{Lap}}(w) &= \Pr\paran{\xi_w} = \Pr\paran{||n_{\textsf{Lap}}||_2 < \delta(w)/2}\\
    &= 2\int_0^{\delta(w)/2} \frac{\epsilon}{4} e^{-\epsilon x/2} dx\\
    &= \int_0^{\epsilon \delta(w)/4} e^{-y} dy = 1 - e^{\paran{-\frac{\epsilon \delta(w)}{4}}}.
\end{align*}
Thus, as $\delta(w)$ increases (the sparsity around $w$ increases), so does $p_{\textsf{Lap}}(w)$, implying that under Laplace mechanism, words inside the sparse regions of the embedding space tend to stay unchanged. However, when $\delta(w)$ approaches $0$ (denser regions), the probability $p_{\textsf{Lap}}(w)$ vanishes. For such regions, $w$ will get modified with probability approaching one, which can potentially reduce utility.

For the same amount of $\epsilon$, the Truncated Gumbel mechanism keeps $w$ unchanged when the noise added to $w$ is smaller than any other perturbed candidate. If $p_{\textsf{Gum}}(w)$ is the probability that $w$ does not change under this perturbation, then we can write the following:
\begin{align*}
    p_{\textsf{Gum}}(w) &\geq \Pr\paran{g_1 < \delta(w) + g_2}\Pr(K \ge 2) \\
    &= \Pr\paran{g_1 - g_2 < \delta(w)} \Pr(K \ge 2)
\end{align*}
Since the difference of two i.i.d. Gumbel random variables follows a Logistic distribution, we obtain the following (by letting $G_b \sim \textsf{Logistic}\paran{0,b}$):
\begin{align*}
    p_{\textsf{Gum}}(w) &\geq \Pr\paran{G_b < \delta(w)}\Pr(K \ge 2) \\
    &= \parfrac{1}{1+e^{-\delta(w)/b}}\Pr(K \ge 2) \\
    &\geq e^{-e^{-\delta(w)/b}}\Pr(K \ge 2),
\end{align*}

where, the last inequality follows since $1+x \le e^x$. Thus, even when $\delta(w)$ approaches $0$ (denser regions), there is at least $p_{\textsf{Gum}}(w) |_{\delta(w) \to 0} \ge \frac{\Pr(K \ge 2)}{e} = \frac{1}{e}\paran{1 - \frac{\log |\mathcal{W}|}{e^{|\mathcal{W}|}}} \xrightarrow{|\mathcal{W}|\to \infty} 36.7\%$ probability that $w$ remains unchanged. This helps preserve utility by ensuring that the modified word is likely to be closer to the original word since there is a significant probability mass around the original word (specially as $|\mathcal{W}|$ increases). 



\begin{table*}[t!]
\caption{Performance of adversarial training approaches on Text Data with(out) perturbations from \textbf{multivariate Laplace mechanism}. The clean accuracy of normal training is $89.50\%$ on IMDB and $82.68\%$ on SNLI. The accuracy from one model higher than that achieved by the other model in the same setting is marked by boldface.}
\label{tab:laplace_results}
\begin{tabular}{@{}cllllllllllll@{}}
\toprule
\multicolumn{3}{c}{$\epsilon$}                                           & 1              & 5              & 9              & 20             & 40             & 60             & 80             & 100            & 150            & 200            \\ \midrule
\multirow{4}{*}{IMDB}                     & \multirow{2}{*}{Clean} & IBP & 81.00          & 81.00          & 81.00          & 81.00          & 81.00          & 81.00          & 81.00          & 81.00          & 81.00          & 81.00          \\\cmidrule(r){3-13}
                                          &                        & Aug & \textbf{88.22} & \textbf{88.20} & \textbf{87.34} & \textbf{87.38} & \textbf{88.60} & \textbf{88.74} & \textbf{88.12} & \textbf{88.46} & \textbf{88.00} & \textbf{87.76} \\\cmidrule(r){2-13}
                                          & \multirow{2}{*}{Adv}   & IBP & 0.30           & 0.50           & 1.20           & 4.90           & \textbf{38.60} & \textbf{68.30} & \textbf{78.50} & \textbf{80.30} & \textbf{80.90} & 81.00          \\ \cmidrule(lr){3-13}
                                          &                        & Aug & \textbf{10.80} & \textbf{8.50}  & \textbf{10.20} & \textbf{6.90}  & 9.50           & 17.70          & 32.10          & 53.00          & 80.50          & \textbf{88.30} \\ \midrule
\multicolumn{1}{l}{\multirow{4}{*}{SNLI}} & \multirow{2}{*}{Clean} & IBP & \textbf{79.19}          & \textbf{79.19}          & \textbf{79.19}          & \textbf{79.19}          & 79.19          & 79.19          & 79.19          & 79.19          & 79.19          & 79.19          \\\cmidrule(r){3-13}
\multicolumn{1}{l}{}                      &                        & Aug & 76.68 & 77.28 & 77.07 & 78.08 & \textbf{81.38} & \textbf{81.79} & \textbf{81.75} & \textbf{81.91} & \textbf{82.17} & \textbf{82.00} \\\cmidrule(r){2-13}
\multicolumn{1}{l}{}                      & \multirow{2}{*}{Adv}   & IBP & 1.84           & 1.90           & 2.21           & 3.70           & \textbf{9.22}  & \textbf{24.19} & 46.62          & 64.92          & 78.73          & 79.16          \\\cmidrule(r){3-13}
\multicolumn{1}{l}{}                      &                        & Aug & \textbf{2.44}  & \textbf{2.61}  & \textbf{3.01}  & \textbf{4.20}  & 9.14           & 24.08          & \textbf{46.94} & \textbf{66.54} & \textbf{81.44} & \textbf{81.94}\\\bottomrule
\end{tabular}
\end{table*}

\end{document}